\documentclass{article}

\usepackage{microtype}
\usepackage{graphicx}
\usepackage{subcaption}
\usepackage{booktabs} 

\usepackage{hyperref}

\newcommand{\pdfeq}[1]{%
    \begin{equation*}
        \begin{gathered}
        \includegraphics[scale=0.8]{figures/#1.pdf}
        \vspace*{-0.5em}
        \end{gathered}
        \label{eq:#1}
    \end{equation*}
}

\usepackage[accepted]{icml2026}

\usepackage{amsmath}
\usepackage{amssymb}
\usepackage{mathtools}
\usepackage{amsthm}
\usepackage{array}      
\usepackage{tabularx}   
\usepackage{booktabs}   

\usepackage[capitalize,noabbrev]{cleveref}

\theoremstyle{plain}
\newtheorem{theorem}{Theorem}[section]
\newtheorem{proposition}[theorem]{Proposition}

\newtheorem{corollary}[theorem]{Corollary}
\theoremstyle{definition}
\newtheorem{definition}[theorem]{Definition}

\newtheorem{conjecture}[theorem]{Conjecture}
\theoremstyle{remark}

\usepackage{xcolor}
\definecolor{orange}{HTML}{EC9940}
\definecolor{red}{HTML}{C35250}
\definecolor{green}{HTML}{98C24C}
\definecolor{purple}{HTML}{A573D3}
\definecolor{blue}{HTML}{78C3FB}

\usepackage{bm}

\newcommand\syn[1]{\bm{\textcolor{orange}{#1}}}
\newcommand\sem[1]{\bm{\textcolor{red}{#1}}}
\newcommand\expl[1]{\bm{\textcolor{green}{#1}}}

\usepackage{enumitem}
\setlist[itemize]{leftmargin=*}
\setlist[enumerate]{leftmargin=*}
\setlist{noitemsep}

\usepackage[disable,textsize=tiny]{todonotes}

\icmltitlerunning{From Mechanistic to Compositional Interpretability}

\begin{document}
\setlength{\baselineskip}{0.98\baselineskip}
\setlength{\parskip}{0.9\parskip}

\twocolumn[
  \icmltitle{From Mechanistic to Compositional Interpretability}



  \icmlsetsymbol{equal}{*}

  \begin{icmlauthorlist}
    \icmlauthor{Ward Gauderis}{equal,ailab}
    \icmlauthor{Thomas Dooms}{equal,independent}
    \icmlauthor{Steven T. Homer}{independent}
    \icmlauthor{Kola Ayonrinde}{aisi}
    \icmlauthor{Geraint A. Wiggins}{ailab}
  \end{icmlauthorlist}

  \icmlaffiliation{ailab}{Vrije Universiteit Brussel, Brussels, Belgium}
  \icmlaffiliation{aisi}{UK AI Security Institute, London, United Kingdom}
  \icmlaffiliation{independent}{Independent}

  \icmlcorrespondingauthor{Ward Gauderis}{ward.gauderis@vub.be}
  \icmlcorrespondingauthor{Thomas Dooms}{doomsthomas@gmail.com}

  \icmlkeywords{Compositionality, Category Theory, Interpretability, Complexity}

  \vskip 0.3in
]



\printAffiliationsAndNotice{\icmlEqualContribution}

\begin{abstract}
Mechanistic interpretability aims to explain neural model behaviour by reverse-engineering learned computational structure into human-understandable components.
Without a formal framework, however, mechanistic explanations cannot be objectively verified, compared, or composed.
We introduce \textit{compositional interpretability}, a category-theoretic framework grounded in the principles of compositionality and minimum description length.
\textit{Compositional interpretations} are pairs of syntactic and semantic mappings that must commute to enforce consistency between a model's decomposition and its observed behaviour.
We deconstruct explanation quality into measures of \textit{faithfulness} and \textit{complexity} to cast interpretability as a constrained optimisation problem, and introduce \textit{compressive refinement} to systematically restructure models into simpler parts without altering their function.
Finally, we derive a parsimony criterion under which syntactic compression theoretically guarantees more concise, human-aligned explanations.
Our framework situates prominent mechanistic methods as subclasses of refinement, and clarifies why their compressibility heuristics tend to align with human interpretability.
Our work provides a measurable, optimisable blueprint for automating the discovery and evaluation of mechanistic explanations.
\end{abstract}

\section{Introduction} \label{sec:introduction}

Neural models derive their capabilities from emergent complexity rather than explicit design. While they learn sophisticated and meaningful functions, their behaviour is difficult to understand because their internal representations are opaque to humans \cite{ayonrindeMathematicalPhilosophyExplanations2025}. Yet interpreting them could yield new scientific insights \citep{evo2, Nainani2025, Pearce2026.04.10.717844} and give us greater control over their behaviour \cite{templeton2024scaling}. Doing so requires a measurable definition of interpretation quality that can be tested and optimised \cite{barbieroFoundationsInterpretableModels2025, madsenInterpretabilityNeedsNew2024}. Without it, the resulting explanations cannot systematically be compared or composed.

Interpretability approaches can be seen as spanning a spectrum from intrinsically interpretable models to post-hoc explanations. Post-hoc methods such as saliency and attribution \citep{binder2016layerwiserelevancepropagationneural} explain individual predictions through surface-level heuristics, but provide little insight into the model's internal mechanisms \citep{ayonrindeMathematicalPhilosophyExplanations2025, adebayo2020sanitycheckssaliencymaps, kim2018interpretabilityfeatureattributionquantitative}. Conversely, intrinsically interpretable models aim to make the structure of the computation transparent by design but require changes to architecture or training \citep{madsenInterpretabilityNeedsNew2024}. Mechanistic interpretability sits between these endpoints by seeking to relate observed behaviour to internal features \citep{alain2018understandingintermediatelayersusing} or to mechanisms discovered through compressibility heuristics such as sparsity \citep{ayonrindeInterpretabilityCompressionReconsidering2024, bricken2023monosemanticity, gao2024scalingevaluatingsparseautoencoders}. While the goal is sound, current mechanistic methods approximate faithfulness rather than enforce it structurally.

The standard reverse-engineering view constructs interpretations in three stages: decomposition, description, and validation \cite{sharkeyOpenProblemsMechanistic2025}. These correspond to breaking a model into simpler functional parts, assigning them explanatory hypotheses, and testing their correctness, respectively. Decomposition fixes the structural foundation, or syntax, on which the latter two stages operate. If that basis is non-compositional, then description and validation can be locally correct while remaining globally uninformative. With a flawed syntax, even perfectly described parts will not add up to a complete explanation of global behaviour.

\begin{figure*}[t]
    \centering
    \includegraphics[width=0.65\textwidth]{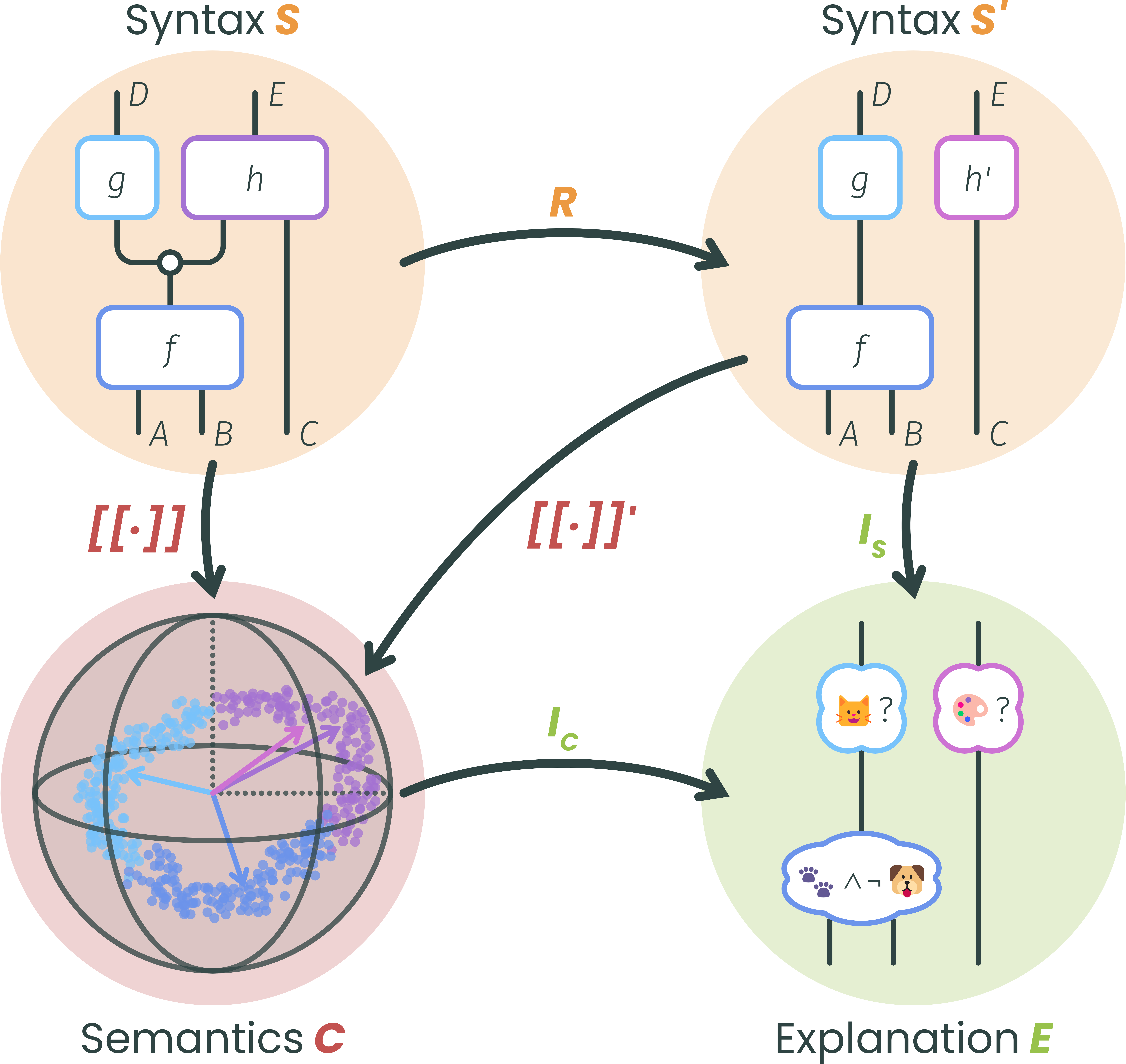}
    \setlength{\belowcaptionskip}{-1em}
    \caption{
    A commutative diagram illustrating compositional interpretability through compressive refinement for a model that classifies animals and their colour. In the original decomposition, a string diagram in $\syn S$, the mechanisms appear structurally entangled even though their learned representations $\sem{[[\,\cdot\,]]}$ in $\sem C$ are not.    
    Through a compressive refinement $\syn R$, a new model decomposition $\syn {S'}$ is discovered that clearly separates the two mechanisms, while the new implementation $\sem {[[\,\cdot\,]]'}$ remains faithful to the model's input-output behaviour in $\sem{C}$.
    Based on this new syntax, a compositional interpretation $\expl{I_S}$ can assign much simpler explanations in $E$ to the disentangled mechanisms. Because all paths commute, the resulting explanation coincides with the post-hoc observed behaviour $\sem{I_C}$ of the original model.
    }
    \label{fig:diagram}
\end{figure*}

To avoid the above shortcomings, we argue that interpretations should be \textbf{compositional}, made of individually describable parts whose composition equals the whole; \textbf{faithful}, approximating the model's internal computation rather than merely its input-output behaviour; and \textbf{concise}, having low description length for a human interpreter. Compositionality enables bidirectional tracing between local parts and global behaviour, while faithfulness prevents explanations from becoming just-so stories. Together, these criteria cast interpretability as constrained optimisation: among all faithful compositional explanations, prefer the most concise.

This paper formalises interpretability through the notion of a \textit{compositional interpretation} (\cref{sec:compositional}), which requires grounding a model in both its structure (syntax) and behaviour (semantics). We characterise the quality of such an interpretation by its faithfulness and complexity (\cref{sec:complexity}), and upper-bound complexity by a tractable measure that can be directly optimised. The optimisation takes the form of \textit{compressive refinement} (\cref{sec:refinement}), restructuring the model into simpler parts while preserving its semantics. 
We show how compressive refinement aligns interpretability under the principles of explanatory optimism and compositional sparsity.
Finally, we situate compositional interpretability relative to other prominent formalisms (\cref{sec:related_work}). A summary of the formal notation and colour-coding is provided in \cref{app:notation}.

\section{Defining Compositional Interpretations} \label{sec:compositional}

The objective of interpreting neural models is rarely specified, leading to ambiguity about what constitutes a valid explanation.
We build on an \textit{explanatory view} of interpretability, which holds that explaining model behaviour requires examining internal mechanisms, not just input-output pairs. This perspective distinguishes between two forms of faithfulness \cite{ayonrindeMathematicalPhilosophyExplanations2025}. 
\textit{Behavioural faithfulness} means that the explanation and the model produce the same outputs. Post-hoc interpretability methods typically aim for this, describing what a model computes by generating explanations that correlate with its data-local surface behaviour, thereby answering questions like ``Why did the model make this particular decision?''
In contrast, \textit{explanatory faithfulness} means the internal steps of the step-by-step explanation match the model's own mechanisms, not just its input-output behaviour. Mechanistic interpretations pursue this by examining how a model operates and elucidating the mechanisms by which it reaches conclusions. The mechanistic approach addresses broader questions such as ``How did the model solve this general class of problems?'', aiming to explain the model's generalisation capabilities rather than just individual decisions \cite{sharkeyOpenProblemsMechanistic2025}.

\begin{definition}[Explanatory faithfulness]
    \label{def:explanatory_faithfulness}
    An explanation $E$ is \textit{explanatorily faithful} to a model $M$ over a data distribution $\mathcal{D}$, to the extent that intermediate activations $s_i$ at each layer $i$ that are given by an algorithmic explanation $E$ closely match the intermediate activations $x_i$ of the model $M$ for input data in $\mathcal{D}$ \cite{ayonrindeMathematicalPhilosophyExplanations2025}.
\end{definition}
Two aspects deserve emphasis. First, explanation quality depends on the interpreter's goals and abilities. Second, faithfulness is relative to a data distribution and may be approximate. This motivates a non-binary notion of faithfulness, which we develop in \cref{sec:refinement}. For now, we discuss the structural implications of explanatory faithfulness.

\paragraph{Compositionality} Neural networks expose a limitation of naive reductionism: despite complete knowledge of their low-level components, their emergent behaviour remains opaque, posing a fundamental challenge for interpretability \cite{ayonrindeMathematicalPhilosophyExplanations2025}. Achieving explanatory faithfulness---matching an explanation's structure to a model's computational mechanisms---requires decomposing the model into meaningful components. While a model's layers offer a natural starting point for such a decomposition (\cref{def:explanatory_faithfulness}), there is no guarantee that these architectural boundaries align with intelligible atoms of computation. 
Indeed, because multi-layer computations can be flattened or vice versa \cite{linWhyDoesDeep2017,danhoferPositionTheoryDeep2025}, architectural depth is an implementation choice. Coupled with the discovery of cross-layer circuits \cite{olsson2022incontextlearninginductionheads, lindsey2024sparse-crosscoders}, this suggests the fundamental units of computation are functional, not architectural.
These observations motivate the need for a more general, functionally grounded principle for explanatory faithfulness: \textit{compositionality}, commonly attributed to Frege \cite{coeckeCompositionalityWeSee2021}.
\begin{definition}[Compositionality]
The meaning of the whole can be derived from the meanings of its parts and how these parts are structured together.
\end{definition}
Compositionality describes the relationship between a system's global properties and those of its components \cite{pucaObstructionsCompositionality2023, https://doi.org/10.1111/j.1755-2567.1970.tb00434.x}. For interpretability, this enables a divide-and-conquer approach: a model's behaviour should emerge from the local understanding of its parts (composition), while behaviour must also be traceable back to the contributions of those parts (decomposition) \cite{pucaObstructionsCompositionality2023}.
However, compositionality is not theory-neutral; it requires a formal theory of both syntax (the parts and their structure) and semantics (their meaning) \cite{hupkesCompositionalityDecomposedHow2020}. The central challenge, therefore, is to identify a formal syntax and semantics for a model that is both faithful to its mechanisms and useful for human understanding.

Category theory provides the mathematical language to formalise this distinction between syntax and semantics and to reason about compositional structures \cite{fongSevenSketchesCompositionality2018,dudzikCategoriesAI2022}. Compositional methods in AI are largely inspired by fields like quantum mechanics, which shift the focus of study from isolated components to their structured interactions \cite{coeckePicturingQuantumProcesses2018}. 
These methods combine explicit structure with distributed representations, helping overcome the curse of dimensionality in model generalisation and interpretation \cite{tullCompositionalInterpretabilityXAI2024,coeckeCompositionalityWeSee2021,hitzlerNeuroSymbolicArtificialIntelligence2021}.
The application of category theory in AI has already produced unifying frameworks for organising insights across the field \cite{shieblerCategoryTheoryMachine2021,dudzikCategoriesAI2022}, whose utility has been demonstrated in model design \cite{gavranovicCategoricalDeepLearning2024, khatriAnatomyAttention2024, rodatzPatternLanguageMachine2025} and conceptual, causal and generative modelling \cite{tullConceptualSpacesQuantum2023,gauderisQuantumTheoryKnowledge2023, lorenzCausalModelsString2023, tullActiveInferenceString2023}.

\paragraph{Compositional interpretability} \textit{String diagrams} \cite{coeckePicturingQuantumProcesses2018} provide the syntactic skeleton needed to separate a model's wiring from its computational behaviour. They describe the flow of information as pure structure (see \cref{fig:diagram} and \cref{app:string_diagram}). By endowing this bare syntax with a semantic representation, we form a \textit{compositional model} \cite{tullCompositionalInterpretabilityXAI2024}.
\begin{definition}[Compositional model] \label{def:compositional_model}
A \textit{compositional model} $M = (\syn D, \syn S, \sem C, \sem{[[\,\cdot\,]]})$ consists of:
\begin{itemize}
\item a string diagram $\syn D: \syn I \to \syn O$ in the monoidal category $\syn S$, representing the \textit{syntax} $\syn S$ of $M$;
\item a monoidal category $\sem C$, providing the \textit{semantics} of $M$;
\item a monoidal \textit{representation} functor $\sem{[[\,\cdot\,]]}: \syn S \to \sem C$, which assigns a semantic instantiation in $\sem C$ to each syntactic component in $\syn S$, preserving the diagrammatic structure.
\end{itemize}
The diagram $\syn D$ is built from a finite collection of components. Its \textit{signature} $\syn{S_D}$ consists of the set of all individual wire types (objects $\text{ob}(\syn{S_D})$) and boxes (generators $\text{mor}(\syn{S_D})$) occurring in \mbox{$\syn D$\footnotemark}.
\footnotetext{
The syntactic category $\syn S = \text{Free}(\syn{S_D})$ is the free monoidal category generated by $\syn{S_D}$, comprising all valid diagrams formed from $\text{ob}(\syn{S_D}$) and $\text{mor}(\syn{S_D})$.
The signature $\syn{S_D}$ may also admit equations $\text{eq}(\syn{S_D})$ specifying equivalence between diagrams.
} 
The image of $\syn{S_D}$ under $\sem{[[\,\cdot\,]]}$ is denoted as $\sem{C_{D}}$, the minimal set of semantic elements required to implement the representation of the diagram $\syn D$.
\end{definition}

Within this framework, the \textit{syntax} $\syn S$, described by the string diagram $\syn D$, defines the model's abstract internal structure, such as neurons, layers, or attention heads in neural models. The diagram provides a high-level structure that can be inspected or reasoned about. 
The \textit{semantics} $\sem C$ provides the model's concrete instantiation or implementation, enabling practical computation and evaluation. The representation $\sem{[[\,\cdot\,]]}: \syn S \to \sem C$ links syntax to semantics, assigning (learned) functions, distributions, or matrices to these syntactic components  (\cref{fig:diagram}).
For interpretability, this distinction between syntax and semantics offers a useful lens for decomposing model behaviour into understandable parts. It elucidates how global model behaviour arises from learned semantic representations composed by syntax. Both features and mechanisms \cite{olahZoomIntroductionCircuits2020} can be viewed as syntactic components, differing primarily in the degree to which they can further be decomposed into subdiagrams. We can understand how different syntactic constructions are (approximately, within known bounds) equivalent by relating them to their concrete semantic instantiations.

While any model can be framed compositionally, this is only useful when oriented toward a specific goal \cite{duneauComparativeFrameworkCompositional2025}. To this end, we identify \textit{compositional interpretations} \cite{tullCompositionalInterpretabilityXAI2024}: human-understandable explanations tied to both a model's syntax and semantics.
%
\begin{definition}[Compositional Interpretation] \label{def:compositional_interpretation}
Given a compositional model $M = (\syn D, \syn S, \sem C, \sem{[[\,\cdot\,]]})$ and a signature of human-interpretable terms $\expl H$, a \textit{compositional interpretation} consists of two compatible mappings:
\begin{itemize}
    \item A syntax-based, \textit{compositional interpretation} $\expl{I_S}: \syn S \to \expl H$, which provides \textit{syntactic grounding} by assigning explanations to the structural components of $M$.
    \item A semantics-based, \textit{post-hoc interpretation} $\expl{I_C}: \sem C \to \expl H$, which provides \textit{semantic grounding} by producing explanations based on the behaviour implemented by $M$.
\end{itemize}
For consistent groundings, this diagram must commute\footnote{Categorically, both mappings would ideally be complete monoidal functors. In practice, these constraints are relaxed to partial maps to account for (as yet) unexplained components.}:
\pdfeq{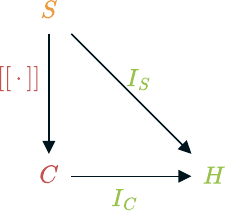}
\end{definition}
The commutativity condition, $\expl{I_S} = \expl{I_C} \circ \sem{[[\,\cdot\,]]}$, is the cornerstone of this definition. It enforces consistency between two forms of interpretation (\cref{fig:diagram}).
The compositional interpretation $\expl{I_S}$ ties the explanation to the model's structure, while the post-hoc interpretation $\expl{I_C}$ ties it to the model's input-output behaviour.
Without this condition, interpretations risk becoming disconnected from the mechanisms they claim to describe.
An interpretation defined solely in terms of semantics, without factoring through a valid syntax, remains causally disconnected from the model's mechanisms: it describes {\it what} the model does, not {\it how} it is done. Consequently, components with distinct implementations might be given identical explanations despite structural differences.
Conversely, an interpretation that is defined solely in terms of syntax labels internal components without verification against their semantics: these labels may not reflect their actual function.
Components with identical semantics might be given different, misleading explanations based on superficial structural distinctions.
By requiring syntax and semantics to agree, a compositional interpretation connects mechanism to behaviour.

In practice, interpretations are partial, meaning they may not account for every component of a model. A \textit{compositionally-interpretable model} admits a complete compositional interpretation $\expl{I_S}$, where every syntactic component receives a human-understandable explanation.
\citet{tullCompositionalInterpretabilityXAI2024} show that this definition generalises the existing notion of intrinsically interpretable models, such as decision trees or linear models, which are considered transparent precisely because their explicit structure permits direct and complete interpretation . The central objective of compositional interpretability is therefore 
{\it the search for compositional interpretations that are as complete as possible}.

\section{Evaluating Compositional Interpretations} \label{sec:complexity}

Compositionality is only a \textit{structural constraint}: it defines the space of valid explanations but does not distinguish the good from the bad. When offered multiple valid choices, which explanation should be preferred?
We take inspiration from Kowalski's (\citeyear{kowalskiAlgorithmLogicControl1979}) dictum 
$$\text{Algorithm} = \sem{\textbf{Logic}} + \syn{\textbf{Control}}.$$ 
In logic programming, the logic component determines the meaning of the algorithm, whereas the control component only affects its implementation. By separating the two, Kowalski argued that computer programs can be improved by freely optimising control for computational efficiency, while the algorithm's logic must be held constant.
Compositional interpretability mirrors this separation as\footnote{This also parallels Marr's \yrcite{marr} computational and algorithmic levels of analysis, respectively.}
$$\text{Model} = \sem{\textbf{Semantics}} + \syn{\textbf{Syntax}}.$$
We treat the semantics as the fixed logic---the input-output behaviour we aim to understand---and the syntax as the variable control structure that decomposes it. Our objective differs from Kowalski's only in the target metric: rather than optimise syntax for efficiency, we optimise for intelligibility.

Our constrained optimisation view of interpretability is formalised through the \textit{Minimum Description Length} (MDL) principle \cite{grunwald2007minimum}. In model selection, MDL identifies the best model ${M}$ for a dataset $\mathcal{D}$ as the one that minimises the total description length $L(\,\cdot\,)$ in bits:
$$L(\mathcal{D}) = L(M) + L(\mathcal{D} \mid M).$$
The total description cost is the length of the model $L(M)$ plus the length of the data encoded, given that model $L(\mathcal{D} \mid M)$.
This two-part code operationalises Occam's razor: all else being equal, prefer the more concise explanation \cite{grunwald2007minimum}.
We apply this principle at a meta-level: interpretability is framed as a constrained communication problem where the goal is to communicate the model's semantics $\sem{[[D]]}$ as efficiently as possible in a human-comprehensible language $\expl H$ \cite{ayonrinde2024interpretabilitycompressionreconsideringsae, braun2025interpretabilityparameterspaceminimizing}. We treat the model's fixed semantics $\sem{[[D]]}$ as the data and the compositional interpretation $\expl{I_S}(\syn D)$ as the hypothesis. The description length of the semantics then decomposes as:
$$L(\sem{[[D]]}) = \underbrace{L(\expl{I_S}(\syn D))}_{\text{Complexity}} + \underbrace{L(\sem{[[D]]} \mid \expl{I_S}(\syn D))}_{\text{Faithfulness}}$$
The first term measures the \textit{complexity} of the explanation. The second quantifies \textit{faithfulness} in the information lost by the explanation. It is the cost of correcting the explanation to match the model's behaviour.
Formulating interpretability through MDL presents distinct challenges compared to standard machine learning: the hypothesis class potentially spans all human-interpretable statements $\expl H$, making the complexity of an explanation subjective and dependent on the interpreter's goals, priors, and computational capabilities. However, unlike inferring a model from noisy empirical data, we have access to the system's closed-form description. Knowing the ground truth without aleatoric uncertainty allows us to separate the MDL terms. Rather than minimising the sum as a soft trade-off, we treat it as a rate-distortion problem and minimise the explanation's complexity subject to a strict lower bound on faithfulness.

We thus refine the central criterion of compositional interpretability as \textit{the search for complete compositional interpretations that yield the least {complex} explanations while remaining {faithful} up to a user-specified threshold}. 
In the remainder of this section, we formalise these measures for compositional interpretations so their optimisation can be automated. Evaluating them naively in the human-interpretable category $\expl H$ would require subjective human involvement. To bypass this, we instead ground them directly in the model's syntax $\syn S$ and semantics $\sem C$.

\paragraph{Faithfulness} The \textit{faithfulness measure} $L(\sem{[[D]]}\mid \expl{I_S}(\syn D))$ quantifies the information needed to correct an explanation. Rate-distortion theory operationalises this by bounding a semantic distortion metric $d$ (e.g., $L_2$ norm or KL-divergence) between the model's true semantics and those reconstructed from the explanation. However, restricting this bound to the global input-output behaviour $\sem{[[D]]}$ ensures only behavioural faithfulness, lacking the mechanistic insight required by \cref{def:explanatory_faithfulness}. While compositional interpretations are structurally well-defined (\cref{def:compositional_interpretation}), they are not inherently informative: describing a component merely as ``a mathematical function'' is valid but trivial. To ensure explanations capture specific structural mechanisms, we impose the stricter measure of \textit{compositional faithfulness}: the interpretation must be detailed enough to reconstruct the semantics of {\it each} component that it explains.
\begin{definition}[Compositional faithfulness]
\label{def:compositional_faithfulness}
Given a compositional model $M = (\syn D, \syn S, \sem C, \sem{[[\,\cdot\,]]})$, a compositional interpretation $\expl{I_S}: \syn S \to \expl H$ is \textit{compositionally faithful} if there exists a \textit{reconstruction} $\sem{I^*_C}: \expl H \to \sem C$ such that the following diagram commutes:
\pdfeq{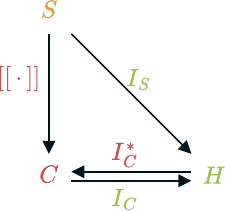}
This constraint is relaxed up to a tolerance $\epsilon \in \mathbb{R}$ to reconcile faithfulness and complexity\footnote{Categorically, relaxing strict inversion to an adjunction $\sem{I^*_C} \dashv \expl{I_C}$ formalises lossy abstraction: $\expl{I_C}$ discards details while $\sem{I^*_C}$ reconstructs the canonical implementation consistent with the explanation. However, our analysis does not rely on such assumptions.}.
Given a metric $d$ on boxes (parallel morphisms) in $\sem C$, we require that $\forall \syn f \in \text{mor}(\syn S)$: $$d(\sem{[[\,f\,]]}, \sem{I^*_C}(\expl{I_S}(\syn f))) \le \epsilon$$
\end{definition}
This condition subsumes explanatory and behavioural faithfulness: reconstructing individual component mechanisms necessarily recovers global behaviour upon composition. Consequently, the interpreter can understand the global system by combining local component explanations according to the diagrammatic structure. 
The metric $d$ defines the semantic reconstruction error relevant to the interpreter, making faithfulness relative rather than absolute. By relaxing $d$ to a premetric over specific subdistributions, we ensure faithfulness where it matters while permitting abstraction elsewhere. For example, evaluating $d$ only over naturally occurring or task-relevant data captures observational alignment, whereas evaluating it over interventional data ensures the explanations are causally faithful \cite{joshiCausalityKeyInterpretability2026}.

\paragraph{Complexity} The \textit{complexity measure} $L(\expl{I_S}(\syn D))$ of an explanation is inherently more elusive than faithfulness since it depends on the interpreter's subjective frame of reference. The lack of a universal metric has posed a significant obstacle to automated evaluation of interpretations. However, the dual nature of compositional interpretations offers a tractable surrogate. Just as we operationalised faithfulness in the semantics $\sem C$ instead of $\expl H$, we bound complexity by the structural derivation of the explanation rather than by the explanation itself. Relying on the factorisation $\expl{I_S} = \expl{I_C}\circ \sem{[[\,\cdot\,]]}$ and functoriality of $\sem{[[\,\cdot\,]]}$, we define the \textit{compositional description length}: the cost of describing the model's compositional representation plus the complexity of interpreting its components' semantics.
\begin{definition}[Compositional description length] \label{def:compositional_description_length}
Let $M = (\syn D, \syn S, \sem C, \sem{[[\,\cdot\,]]})$ be a compositional model and $\expl{I_S} : \syn S \to \expl H$ a compositional interpretation for $M$. The \textit{compositional description length} $L(M, \expl{I_S})$ of its explanation $\expl{I_S}(\syn D)$ is defined as the sum of its representation and interpretation complexities.
$$
L(M, \expl{I_S}) := L^{\sem{\text{rep}}}(M) + L^{\expl{\text{int}}}(M, \expl{I_C})
$$
Where \textit{representation complexity} $L^{\sem{\text{rep}}}(M)$ is the syntax-based cost of describing the semantics of the individual components in $\syn{S_D}$ and their wiring in $\syn D$ separately. $$L^{\sem{\text{rep}}}(M) := L(\syn D) + \sum_{\syn f \in \text{mor}(\syn{S_D})} L(\sem{[[\,f\,]]})$$
And \textit{interpretation complexity} $L^{\expl{\text{int}}}(M, \expl{I_C})$ is the cost of describing the post-hoc interpretation method for the individual semantic elements in $\sem{C_D}$. $$L^{\expl{\text{int}}}(M, \expl{I_C}) := L(\expl{I_C}\rvert_{\sem{{C}_{D}}})$$
\end{definition}
The decomposition splits the description length of an explanation into two distinct parts: an objective structural cost and a subjective alignment cost. 
The \textit{representation complexity} $L^{\sem{\text{rep}}}(M)$ depends solely on the model. It captures the trade-off between the granularity of the decomposition $\syn D$ and the semantic complexity of the atomic units $\sem{[[f]]}$. This makes it a tractable target for automated optimisation (\cref{app:worked_example}).
Conversely, the \textit{interpretation complexity} $L^{\expl{\text{int}}}(M, \expl{I_C})$ isolates the subjective difficulty of aligning the semantics of the atomic units in $\syn D$ with human concepts. This cost is inherently variable, scaling with the conceptual gap \cite{ayonrindePositionInterpretabilityBidirectional2025} between the components' semantics relative to the interpreter's mental model \cite{sutterNonLinearRepresentationDilemma2025}.
Optimising $L(M,\expl{I_S})$ therefore balances syntactic parsimony with semantic simplicity: we seek a decomposition that is concisely described yet composed of parts that are naturally aligned with the interpreter. 
As $L(M, \expl{I_S})$ represents the information cost of the compositional derivation, it upper-bounds the theoretical minimal description length $L(\expl{I_S}(\syn D))$ of the explanation itself.
\begin{theorem}[Description length bound]
\label{prop:description_length_bound}
Compositional description length $L(M, \expl{I_S})$ is an upper bound on the minimal description length $L(\expl{I_S}(\syn D))$ of a model's explanation.
$$
L(\expl{I_S}(\syn D))\le L(M, \expl{I_S})
$$
The proof is deferred to \cref{app:proof}.
\end{theorem}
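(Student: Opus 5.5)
The plan is to read $L(\cdot)$ as minimal description length in the Kolmogorov/MDL sense: the length of the shortest program in a fixed universal description language that outputs the object. The bound then follows by exhibiting one explicit description of $\expl{I_S}(\syn D)$ whose length is at most $L(M,\expl{I_S})$, up to the usual additive constant absorbed into the choice of prefix-free encoding. Since $L(\expl{I_S}(\syn D))$ is a minimum over all descriptions, any concrete description upper-bounds it.

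The construction follows the factorisation $\expl{I_S} = \expl{I_C}\circ\sem{[[\,\cdot\,]]}$ from \cref{def:compositional_interpretation}. The description is a prefix-free concatenation of three parts:
\begin{enumerate}
\item a code for the wiring $\syn D$, costing $L(\syn D)$;
\item for each generator $\syn f\in\text{mor}(\syn{S_D})$, a code for its semantics $\sem{[[\,f\,]]}$, costing $\sum_{\syn f}L(\sem{[[\,f\,]]})$;
\item a code for the restricted post-hoc interpretation $\expl{I_C}\rvert_{\sem{C_D}}$, costing $L^{\expl{\text{int}}}(M,\expl{I_C})$.
\end{enumerate}
A fixed decoder then works as follows. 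It parses $\syn D$ and looks up, for each box, its decoded semantics. It applies $\expl{I_C}\rvert_{\sem{C_D}}$ to each decoded semantic element to obtain $\expl{I_C}(\sem{[[\,f\,]]}) = \expl{I_S}(\syn f)$. Finally, it assembles these local explanations according to the wiring of $\syn D$. Because $\expl{I_S}$ is a monoidal functor, preserving composition and tensor, this assembly yields exactly $\expl{I_S}(\syn D)$.

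The restriction of $\expl{I_C}$ to $\sem{C_D}$ is sufficient because $\sem{C_D}$ is, by \cref{def:compositional_model}, the image of the signature $\syn{S_D}$. Every semantic element the decoder needs therefore lies in that domain. Wire types are handled the same way, since they are determined by the generators and by $\syn D$.

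Summing the three part lengths gives exactly $L^{\sem{\text{rep}}}(M)+L^{\expl{\text{int}}}(M,\expl{I_C}) = L(M,\expl{I_S})$. Minimality of $L(\expl{I_S}(\syn D))$ then gives the inequality.

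The main obstacle is the bookkeeping of overheads. These include the constant-size decoder and the delimiters needed to concatenate codes and to pair each $\sem{[[\,f\,]]}$ with its box. I would handle them by assuming, as is standard in MDL, that all lengths are measured with prefix-free codes relative to a fixed reference machine. Under that assumption the decoder is $O(1)$ and is absorbed into the convention that $L$ is defined up to such constants, and box-to-semantics pairing is implicit in the ordering of generators fixed by the code for $\syn D$.

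A second subtlety is the footnote allowing partial interpretations. For partial interpretations, I would apply the same argument to the explained subdiagram only, with $\expl{I_S}(\syn D)$ read as the assembled partial explanation.
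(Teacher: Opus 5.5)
Your argument is correct at the paper's level of rigour, but it runs the composition in the opposite order from the proof in \cref{app:proof}.

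The paper goes compose-then-interpret. Commutativity rewrites $L(\expl{I_S}(\syn D))$ as $L(\expl{I_C}(\sem{[[\,D\,]]}))$. Information non-increase bounds this by $L(\sem{[[\,D\,]]}) + L(\expl{I_C}\rvert_{\sem{C_D}})$. Subadditivity plus functoriality of $\sem{[[\,\cdot\,]]}$ then bounds $L(\sem{[[\,D\,]]})$ by $L^{\sem{\text{rep}}}(M)$.

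You go interpret-then-compose with a single explicit three-part code. You evaluate $\expl{I_C}$ only on generator images and assemble the local explanations in $\expl H$. This is valid because a monoidal functor out of $\syn S=\text{Free}(\syn{S_D})$ is determined by its values on generators.

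What your route gains is that $\expl{I_C}\rvert_{\sem{C_D}}$ is only ever applied on its actual domain. The paper's middle step applies it to the composite $\sem{[[\,D\,]]}$, which generally lies outside $\sem{C_D}$; in \cref{app:worked_example}, the overall classifier is not one of the component matrices. That step therefore tacitly relies on the very functoriality of the interpretation that you make explicit.

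What it costs is an extra assumption. You need $\expl{I_S}$ to be a genuine monoidal functor into a category $\expl H$ with effectively computable composition and tensor, which the paper only posits as the ideal case in a footnote. The paper's chain invokes only the functoriality of $\sem{[[\,\cdot\,]]}$ granted by \cref{def:compositional_model}. This is why you need your separate remark on partial interpretations.

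Your explicit accounting of the $O(1)$ decoder and prefix-free overhead is also an improvement. The paper states non-increase and subadditivity as exact inequalities, but for Kolmogorov-style lengths they hold only up to additive constants.
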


\section{Improving Compositional Interpretations} \label{sec:refinement}

Since $L(M,\expl{I_S})$ corresponds to the length of a realisable encoding, it serves as a tractable surrogate to optimise $L(\expl{I_S}(\syn D))$. Optimising this bound formalises the natural workflow in mechanistic interpretability: identifying a suitable decomposition, interpreting its components, and verifying their descriptions causally \cite{sharkeyOpenProblemsMechanistic2025}. Since structurally modifying a model is more tractable than solving the subjective alignment problem, recent research has predominantly been successful in reducing representation complexity $L^{\sem{\text{rep}}}(M)$ \cite{gaoWeightsparseTransformersHave2025, dunefskyTranscodersFindInterpretable2024, bushnaqStochasticParameterDecomposition2025}, while interpretation complexity $L^{\expl{\text{int}}}(M, \expl{I_C})$ remains less explored \cite{sutterNonLinearRepresentationDilemma2025,voitaInformationTheoreticProbingMinimum2020,oldfieldLinearProbesDynamic2025}. In this work, we align with this strategy, focusing on the first step: finding decompositions that minimise representational cost. 
We formalise this process as \textit{syntactic refinement} \cite{tullCompositionalInterpretabilityXAI2024}: compressing the model's representation by optimising the syntax while keeping the semantics fixed.

\begin{definition}[Syntactic refinement]
\label{def:syntactic_refinement}
A \textit{syntactic refinement} of a compositional model $M = (\syn D, \syn S, \sem C, \sem{[[\,\cdot\,]]})$ into a semantically equivalent model $M' = (\syn{D'}, \syn{S'}, \sem C, \sem{[[\,\cdot\,]]'})$ is a functor $\syn R: \syn S \to \syn{S'}$ such that $\syn{D'} = \syn R(\syn D)$\footnote{The dual functor $\syn{R'}: \syn{S'} \to \syn S$ yields \textit{syntactic abstraction}. By functor composition, one can selectively refine and abstract representations while preserving a trace to the original model.}.

For unchanged semantics $\sem C$, this diagram must commute\footnote{
In practice, this constraint is relaxed up to a tolerance $\epsilon \in \mathbb{R}$ to reconcile (global) faithfulness and description length. Given a metric $d$ (or generalisation thereof) on $\text{Hom}_C(\sem{[[I]]}, \sem{[[O]]})$, we require $d(\sem{[[\,D\,]]},\sem{[[\,D'\,]]'}) \le \epsilon$.}:
\pdfeq{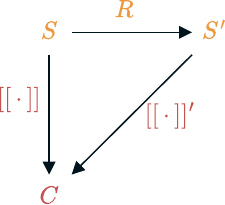}
If $L^{\sem{\text{rep}}}(M') \le L^{\sem{\text{rep}}}(M)$, the refinement reduces the representation complexity and is \textit{compressive}.
\end{definition}
Compressive refinement $\syn R$ operationalises the redistribution of information from semantic atoms to syntactic structure, subject to $\sem{[[\,\cdot\,]]} = \sem{[[\,\cdot\,]]'} \circ \syn R$. By unpacking a black box with high $L(\sem{[[D]]})$ into a diagram of simpler sub-components (increased $L(\syn D)$ but lower $\sum_{\syn f \in \text{mor}(\syn{S_D})} L(\sem{[[f]]})$), the model's internal regularities become explicit (\cref{fig:diagram}). An efficient refinement seeks the optimal granularity where the reduction in component complexity outweighs the cost of describing the additional wiring (\cref{app:worked_example}).

This gives a formal rationale for the search for ``atoms of computation'' \cite{olahZoomIntroductionCircuits2020}: primitive components that are algebraically simple and structurally explicit. However, mathematical simplicity does not guarantee human understanding. As captured by the interpretation complexity term in \cref{def:compositional_description_length}, syntactic compression alone is insufficient. A refinement might simplify syntax at the cost of introducing components that are harder to interpret. For instance, a syntactic refinement could decrease representation complexity $L^{\sem{\text{rep}}}(M)$ by replacing linearly alignable components with nonlinear ones \cite{elhage2022toymodelssuperposition}, thereby increasing interpretation complexity ${L}^{\expl{\text{int}}}(M, \expl{I_C})$. Consequently, a compressive refinement improves the explanation only if the reduction in representation complexity outweighs any increase in interpretation complexity. We formalise this necessary condition via the \textit{parsimony criterion}, a decision rule that follows from \cref{prop:description_length_bound}.
\begin{proposition}[Parsimony criterion for refinement]
\label{the:parsimony_criterion}
Let $M'$ be a compressive refinement of $M$. Let $\expl{I_S}$ and $\expl{I_S'}$ be the compositional interpretations of $M$ and $M'$ respectively, with the same semantic grounding $\expl{I_C}: \sem C \to \expl H$.
The refined model $M'$ yields a more parsimonious explanation in terms of compositional description length 
$$L(M', \expl{I_S'}) \le L(M, \expl{I_S})$$
if and only if the gained representation simplicity outweighs the interpretation complexity:
$$\underbrace{L^{\sem{\text{rep}}}(M) - L^{\sem{\text{rep}}}(M')}_{\text{Decrease in representation complexity}} \ge \underbrace{L^{\expl{\text{int}}}(M', \expl{I_C}) - L^{\expl{\text{int}}}(M, \expl{I_C})}_{\text{Increase in interpretation complexity}}$$
\end{proposition}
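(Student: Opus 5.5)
The plan is to unfold \cref{def:compositional_description_length} for both models and rearrange a single linear inequality; no deeper structure is needed. Since $\expl{I_S}$ and $\expl{I_S'}$ share the same semantic grounding $\expl{I_C}$, the interpretation complexity of each model is the cost of the restriction of this common $\expl{I_C}$ to the relevant semantic elements: $\sem{C_D}$ for $M$ and $\sem{C_{D'}}$ for $M'$. This is the key observation. The only quantities that change between $M$ and $M'$ are then the syntactic diagram, the component semantics, and which restriction of $\expl{I_C}$ is paid for. In particular, $L(M,\expl{I_S})$ depends on $\expl{I_S}$ only through the factorisation $\expl{I_S}=\expl{I_C}\circ\sem{[[\,\cdot\,]]}$, and likewise $\expl{I_S'}=\expl{I_C}\circ\sem{[[\,\cdot\,]]'}$.

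First, write
\[
L(M,\expl{I_S}) = L^{\sem{\text{rep}}}(M) + L^{\expl{\text{int}}}(M,\expl{I_C})
\]
and
\[
L(M',\expl{I_S'}) = L^{\sem{\text{rep}}}(M') + L^{\expl{\text{int}}}(M',\expl{I_C}).
\]
All four terms are finite nonnegative reals, being description lengths of finite objects: a finite diagram, finitely many generators, and a restriction to a finite set $\sem{C_D}$. The inequality $L(M',\expl{I_S'})\le L(M,\expl{I_S})$ is therefore equivalent to
\[
L^{\sem{\text{rep}}}(M') + L^{\expl{\text{int}}}(M',\expl{I_C}) \le L^{\sem{\text{rep}}}(M) + L^{\expl{\text{int}}}(M,\expl{I_C}).
\]
Subtracting $L^{\sem{\text{rep}}}(M') + L^{\expl{\text{int}}}(M,\expl{I_C})$ from both sides is an equivalence of real inequalities, and it yields exactly the stated criterion. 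Both directions of the ``if and only if'' follow at once, because every step is reversible.

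Compressiveness, $L^{\sem{\text{rep}}}(M')\le L^{\sem{\text{rep}}}(M)$, is not needed for the equivalence. It only guarantees that the left-hand side of the criterion is nonnegative, so the criterion reads as ``gain outweighs cost''. I would remark on this, and also note the trivial case: if the interpretation complexity does not increase, the criterion holds automatically.

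The main obstacle is justifying that $L^{\expl{\text{int}}}(M',\expl{I_C})$ is well defined and comparable with $L^{\expl{\text{int}}}(M,\expl{I_C})$. That is, the same description-length functional $L$ must be applied to restrictions of one fixed $\expl{I_C}$ to two different finite sets, $\sem{C_D}$ versus $\sem{C_{D'}}$, so that the two quantities sit on a common scale. I would state this as the standing convention implicit in \cref{def:compositional_description_length}: one fixed encoding scheme for all models over the semantics $\sem C$. Finally, I would connect the result to \cref{prop:description_length_bound}. When the criterion holds, the upper bound on $L(\expl{I_S'}(\syn{D'}))$ is at most the bound on $L(\expl{I_S}(\syn D))$, which justifies calling this a decision rule following from that theorem. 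It is not, however, a claim about the true minimal lengths themselves.
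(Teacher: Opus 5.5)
Your argument is correct and is essentially the paper's own. The paper gives no separate proof and treats the criterion as immediate from unfolding \cref{def:compositional_description_length} for $M$ and $M'$ with the shared grounding $\expl{I_C}$ and rearranging, exactly as you do. Your side remarks are also accurate: compressiveness plays no role in the equivalence, and the link to \cref{prop:description_length_bound} is motivational, since it concerns the upper bounds rather than the true minimal lengths, and is not a logical dependency.
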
%

The parsimony criterion underscores that a compressive refinement is not an end in itself but a means to balance two tensions in finding concise compositional explanations:
\begin{enumerate}
    \item \textit{Representation vs.\ interpretation complexity:} There is an external risk that optimising the objective representation complexity $L^{\sem{\text{rep}}}(M)$ comes at the cost of the subjective interpretation complexity ${L}^{\expl{\text{int}}}(M, \expl{I_C})$. A concise description is not necessarily easy for a human to interpret.
    \item \textit{Wiring vs.\ component complexity:} Internal to representation complexity, there is a tension between the wiring complexity $L(\syn D)$ and the semantic complexity of the individual components $\sum_{\syn f \in \text{mor}(\syn{S_D})} L(\sem{[[f]]})$. Decomposing components reduces their individual cost but increases the diagram size $\syn D$. An exponentially large diagram undermines behavioural faithfulness, as the interpreter cannot tractably compose all interactions.
\end{enumerate}
While these trade-offs suggest that naively optimising $L^{\sem{\text{rep}}}(M)$ alone could potentially lead to uninterpretable model decompositions, practical evidence indicates that compression often yields interpretable results \cite{gaoWeightsparseTransformersHave2025, dunefskyTranscodersFindInterpretable2024, braun2025interpretabilityparameterspaceminimizing, autoencoders}. Under compressive refinements, the two pressures described above are often aligned rather than adversarial. We argue this can be made deliberate, addressing the first tension through a comonotonic coding distribution and the second through compositional sparsity.

\paragraph{Representation vs.\ interpretation complexity} To resolve the first tension between representation and interpretation complexity, we draw on the intuition that the regularities a model exploits to solve a task often correspond to concepts humans use to understand that task. This intuition is captured by the \textit{Weak Principle of Explanatory Optimism}, a conjecture usually left implicit in interpretability \cite{ayonrindeMathematicalPhilosophyExplanations2025}. 
\begin{conjecture}[Weak principle of explanatory optimism]
\label{con:optimism}
Most important behaviour of an intelligence with implicit explanatory knowledge within some explanatory complexity class is human-understandable \cite{ayonrindeMathematicalPhilosophyExplanations2025}.
\end{conjecture}
Since structure is fundamentally linked to compressibility, compressive refinement acts as a filter for these regularities. Lossy compression that retains only the most important model behaviour (measured by $d$ in \cref{def:syntactic_refinement}) strips away redundancy to isolate the core mechanisms. The conjecture implies that if the model performs well, these isolated mechanisms are human-understandable.
To ensure alignment to human concepts, we encode explanatory optimism in the coding distribution that defines representation complexity, $L^{\sem{\text{rep}}}(M) := - \log_2 P^{\sem{\text{rep}}}(M)$ \cite{grunwald2007minimum}. By choosing a $P^{\sem{\text{rep}}}(M)$ that encodes our prior \textit{belief} in explanatory optimism, assigning higher probability to representations we expect to be interpretable (such as those exhibiting sparsity, low dimensionality, or hierarchy), $L^{\sem{\text{rep}}}(M)$ measures how well a model satisfies this prior, aligning representation complexity with interpretation complexity.
\begin{corollary}[Comonotonic coding]
\label{the:comonotonicity}
Any compressive refinement satisfies the parsimony criterion (\cref{the:parsimony_criterion}) if its coding distribution $P^{\sem{\text{rep}}}(M)$ is \textnormal{comonotonic} with the induced interpretation distribution:
$$P^{\sem{\text{rep}}}(M) \le P^{\sem{\text{rep}}}(M') \implies 2^{-L^{\expl{\text{int}}}(M, \expl{I_C})} \le 2^{-L^{\expl{\text{int}}}(M', \expl{I_C})}$$
\end{corollary}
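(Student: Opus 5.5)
The plan is to chain three monotonic translations and then close with \cref{the:parsimony_criterion}. Because the refinement is compressive, $L^{\sem{\text{rep}}}(M') \le L^{\sem{\text{rep}}}(M)$. With the code-length convention $L^{\sem{\text{rep}}}(M) = -\log_2 P^{\sem{\text{rep}}}(M)$, and since $-\log_2$ is strictly decreasing, this is the same as $P^{\sem{\text{rep}}}(M) \le P^{\sem{\text{rep}}}(M')$. That inequality is exactly the antecedent of the comonotonicity hypothesis. Applying the hypothesis gives $2^{-L^{\expl{\text{int}}}(M, \expl{I_C})} \le 2^{-L^{\expl{\text{int}}}(M', \expl{I_C})}$. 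Since $x \mapsto 2^{-x}$ is strictly decreasing, this yields $L^{\expl{\text{int}}}(M', \expl{I_C}) \le L^{\expl{\text{int}}}(M, \expl{I_C})$.

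Next I will compare the two sides of the criterion in \cref{the:parsimony_criterion}. The left-hand side is $L^{\sem{\text{rep}}}(M) - L^{\sem{\text{rep}}}(M')$, which is $\ge 0$ by compressiveness. The right-hand side is $L^{\expl{\text{int}}}(M', \expl{I_C}) - L^{\expl{\text{int}}}(M, \expl{I_C})$, which is $\le 0$ by the previous step. So the required inequality holds, with $0$ sitting between the two sides. The ``if'' direction of \cref{the:parsimony_criterion} then gives $L(M', \expl{I_S'}) \le L(M, \expl{I_S})$.

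One could also bypass \cref{the:parsimony_criterion}. By \cref{def:compositional_description_length}, $L(M,\expl{I_S})$ is the sum of the two terms, and each term weakly decreases under the refinement, so the sum does too.

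The main obstacle is bookkeeping rather than analysis. The first point to check is that $L^{\sem{\text{rep}}}$ really is identified with $-\log_2 P^{\sem{\text{rep}}}$ for both $M$ and $M'$ under a single common coding distribution. If different codes were used for the two models, the translation between compressiveness and the probability inequality would break. The second point is that $\expl{I_C}$ is the same semantic grounding for both models, as the statement of \cref{the:parsimony_criterion} assumes, so that $L^{\expl{\text{int}}}$ differs only through the restriction to $\sem{C_D}$ versus $\sem{C_{D'}}$. Finally, I will note that the probabilities are positive wherever lengths are finite, so that the logarithms are well defined.
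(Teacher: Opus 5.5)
Your proof is correct and is the argument the paper intends: the paper gives no written proof, treating the corollary as immediate from the parsimony criterion and the convention $L^{\sem{\text{rep}}} = -\log_2 P^{\sem{\text{rep}}}$. Compressiveness supplies the antecedent of comonotonicity, which gives $L^{\expl{\text{int}}}(M', \expl{I_C}) \le L^{\expl{\text{int}}}(M, \expl{I_C})$, so $0$ separates the two sides of the criterion; your caveats about a single shared coding distribution and a shared $\expl{I_C}$ for $M$ and $M'$ are exactly the assumptions the paper leaves implicit.
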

Consequently, the closer our chosen coding distribution captures actual interpretation complexity, the more reliably a compressive refinement improves the full compositional description length\footnote{Readers reluctant to assert explanatory optimism as a literal belief can instead adopt a \textit{luckiness function} within the Normalised Maximum Likelihood framework \cite{grunwaldMinimumDescriptionLength2019}. This replaces the subjective Bayesian \textit{belief} with a pragmatic \textit{hope} for interpretability; if the assumption is wrong, the code is merely suboptimal, but worst-case guarantees remain intact.}. By explicitly adopting a comonotonic coding distribution, compressive refinement is guaranteed to yield more concise explanations.

Crucially, none of our formal results assume explanatory optimism to be true; a comonotonic code only makes compressibility explicitly track interpretability. When this belief fails for a given model, so does compression with a comonotonic code: if the components' semantics admit no simpler interpretation, the absence of a successful compressive refinement indicates exactly this.

\paragraph{Wiring vs.\ component complexity} To address the second tension between wiring and component complexity, ensuring that the string diagram $\syn D$ remains manageable through syntax refinement, we invoke a result from approximation theory regarding the nature of the functions neural networks can approximate. \textit{Compositional sparsity} ensures that the search for ``atoms of computation'' is feasible. The graph size $L(\syn D)$ is expected to grow only polynomially because the semantics approximated by the model admit a sparse decomposition.
\begin{theorem}[Compositional sparsity]
Deep (neural) networks of polynomial size efficiently approximate {compositionally sparse} functions. A function is \textnormal{compositionally sparse} if it can be decomposed into polynomially many constituent functions, each depending on a small, constant number of variables. This class includes all efficiently Turing-computable functions \cite{danhoferPositionTheoryDeep2025}.
\end{theorem}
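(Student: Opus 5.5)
The plan is to prove the two halves of the statement separately, treating a compositionally sparse function $f$ as a directed acyclic graph (equivalently a string diagram in the sense of \cref{def:compositional_model}). There are $N = \mathrm{poly}(n)$ nodes, and each node $h_j$ depends on at most $k = O(1)$ inputs. This matches the paper's syntax/semantics split: the graph is $\syn D$ and the constituents are the boxes $\sem{[[h_j]]}$. For the approximation half, I will replace each box by a small network and take the composite. I will not approximate $f$ directly.

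First, the local step. Since each constituent $h_j:[0,1]^k\to[0,1]$ has constant arity $k$, classical approximation results apply with no curse of dimensionality, provided $h_j$ lies in a smoothness class, say Lipschitz or $C^m$; this assumption is needed and I will take it as part of the definition. The relevant results are the Mhaskar--Poggio bounds for $C^m$ functions, or piecewise-linear interpolation on a grid for Lipschitz functions. Each $h_j$ is then $\delta$-approximated in sup norm by a network $\hat h_j$ of size $O(\delta^{-k/m})$. Because $k$ and $m$ are constants, this size is polynomial in $1/\delta$.

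Second, the global step. I assemble the $\hat h_j$ along the same DAG, which is exactly functoriality of composition, and the resulting deep network has size $\sum_j |\hat h_j| = N\cdot O(\delta^{-k/m})$. To control the error I propagate it through the graph. If every $h_j$ is $\lambda$-Lipschitz and the DAG has depth $\ell$, an induction over topological order gives
\begin{equation*}
\|f-\hat f\|_\infty \le \delta \sum_{t=0}^{\ell-1} (k\lambda)^t .
\end{equation*}
Each $\hat h_j$ must also map into the domain of its successors, which I will ensure by clipping with a ReLU. Choosing $\delta = \epsilon / \sum_t (k\lambda)^t$ gives a total size that is polynomial in $n$ and $1/\epsilon$.

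This error propagation is the main obstacle. If $\lambda k>1$ and the depth $\ell$ grows with $n$, the factor $(k\lambda)^\ell$ can become exponential. I would first try to absorb it by requiring $\lambda\le 1/k$ after rescaling, or by restricting to logarithmic depth, or by assuming bounded products of Lipschitz constants along paths. If none of these is acceptable, I will state the polynomial bound under exactly such a hypothesis, which is how I expect the cited result to be phrased.

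For the final claim, I use the standard simulation of a Turing machine running in time $T(n)=\mathrm{poly}(n)$ by a Boolean circuit of size $O(T(n)^2)$ (or $O(T\log T)$), built from fan-in-2 gates. Each gate is a constituent function of two variables, so the circuit witnesses compositional sparsity. Each Boolean gate is exactly representable by a constant-size ReLU network on $\{0,1\}$, extended to be Lipschitz with $\lambda$ bounded, and thresholding at every layer resets the errors. This removes the error-amplification obstacle in the Boolean case and yields a polynomial-size network.
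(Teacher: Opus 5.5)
The paper gives no proof of this statement. It is imported as a cited result from \cite{danhoferPositionTheoryDeep2025}, so your sketch reconstructs the standard argument of that line of work rather than competing with one in the text: approximate the constant-arity constituents locally, assemble them along the graph, and unroll a polynomial-time machine into a bounded fan-in circuit. Under the hypotheses you state, it is sound.

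Making those hypotheses explicit is what your version adds over the bare citation. The approximation half cannot hold without a regularity class for the constituents, with smoothness norms or Lipschitz constants bounded uniformly in $n$. Polynomial size also genuinely requires controlling the amplification factor $\sum_{t<\ell}(k\lambda)^t$, which the informal wording hides. The classical binary-tree results have logarithmic depth, which is exactly the regime where this factor stays polynomial.

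You should say more sharply that your two halves do not chain. The Turing half needs no approximation theory: membership in the class follows directly from the circuit. The circuit has polynomial depth, and its gates, extended to $[0,1]$, have $k\lambda = 2$, which is precisely where your general bound blows up. The network claim survives only because AND, OR and NOT are computed \emph{exactly} on $\{0,1\}$ by constant-size ReLU nets: $\mathrm{ReLU}(x+y-1)$, $1-\mathrm{ReLU}(1-x-y)$ and $1-x$. So there is no error to reset, and your thresholding remark matters only for noisy inputs or approximate gates.

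Finally, this argument covers functions on bit strings. If ``efficiently Turing-computable'' is read for real-valued functions, you must also encode real inputs into bits. That encoding is discontinuous, so you need a separate modulus-of-continuity argument before you can claim uniform approximation.
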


Together, the operationalised principles of explanatory optimism and compositional sparsity justify minimising representation complexity as a tractable surrogate for the full description length. Compressive refinement distils the model into a sparse syntax with concise semantics that are aligned with human concepts. Therefore, compressive refinement yields better compositionally interpretable models.

\section{Related Work}
\label{sec:related_work}


\paragraph{Mechanistic interpretability}
Existing methods use various syntactic refinements $\syn R: \syn S \to \syn{S'}$ (\cref{def:syntactic_refinement}), differing in how tightly $\syn{S'}$ must track the given $\syn S$. Early work often used the identity refinement $\syn R = \syn{\textnormal{id}_S}$, treating architectural units such as neurons as the atoms of analysis \citep{olah2017feature}. Contemporary research is systematically moving to stronger decomposition classes, ranging from piecewise substitution \citep{bricken2023monosemanticity}, to relying on architectural invariants \citep{lindsey2024sparse-crosscoders}, or disregarding the original syntax altogether \citep{gaoWeightsparseTransformersHave2025}. Compositional interpretability makes this balance explicit and provides a parsimony criterion for decomposition selection. A more detailed comparison can be found in \cref{app:interpretability_comparison}.

\paragraph{Measurable interpretability} To bridge the traditional intrinsic and post-hoc interpretability paradigms, \citet{madsenInterpretabilityNeedsNew2024} propose \textit{inherently faithfulness-measurable models}. Rather than imposing inflexible constraints that directly generate explanations, such models allow efficient evaluation of explanation faithfulness by design. When a measurable target metric exists, one can search directly for metric-optimal explanations rather than rely on heuristic proxies like sparsity or low-rankness \cite{zhouSolvabilityInterpretabilityEvaluation2023}.
Compositional interpretability extends this principle to mechanistic explanations, but recognises that any metric of faithfulness must be balanced against a measure of complexity. Only when both criteria are efficiently measurable can the discovery of interpretable decompositions be cast as a standard optimisation procedure.

\paragraph{Symmetry-formalised interpretability} \citet{barbieroActionableInterpretabilityMust2026a} propose defining actionable interpretability through invariance under four symmetries. These strict constraints naturally guide the design of intrinsically interpretable models \citep{barbieroFoundationsInterpretableModels2025}, but are difficult to apply to mechanistic or post-hoc interpretations of black-box models, where such symmetries are not guaranteed to hold.
Compositional interpretability instead builds on two fundamental principles: compositionality and MDL for explanation grounding and selection, respectively. These two principles recover analogous desiderata: \textit{inference equivariance} is implied by the functoriality of compositional interpretations, \textit{information invariance} by minimisation of representation complexity via compressive refinement, \textit{concept-closure invariance} by minimisation of interpretation complexity, and \textit{structural invariance} by separation of syntax from semantics. Replacing symmetries with an MDL objective yields a spectrum of compositionally interpretable models that also includes non-interpretable models. In this spectrum, actionable interpretability is reframed from an architectural constraint into a continuous optimisation objective.

\paragraph{Causal abstraction} \citet{geigerCausalAbstractionTheoretical2025} frame mechanistic interpretability as causal alignment between low-level model decompositions and high-level explanations. While rigorous for top-down hypothesis testing, this approach assumes the high-level explanation and the decomposition search space are known a priori \cite{geigerFindingAlignmentsInterpretable2024}. Moreover, \citet{sutterNonLinearRepresentationDilemma2025} note that without assuming linear representations, this alignment becomes unidentifiable: any arbitrary high-level explanation can be mapped onto any low-level decomposition.
Compositional interpretability focuses instead on the antecedent step of structural decomposition. It instead searches bottom-up for simple decompositions from which candidate algorithms can be extracted. Its formulation is directly compatible with causal abstraction and inference \citep{lorenzCausalModelsString2023,lorenzCausalCompositionalAbstraction2026}: our definition of syntactic refinement aligns with the stronger notion of \textit{strict component-level abstraction}, and our formulation of compositional faithfulness can incorporate causal validity through the semantic distortion measure $d$. As such, compressive refinement supplies the candidate decompositions and algorithms required for distributed alignment search.

\section{Conclusion}

\paragraph{Summary} This paper establishes a formal foundation for mechanistic interpretability based on the first principles of compositionality and minimum description length. We argue that interpretability methods should explicitly aim to make models \textit{compositionally interpretable}, and that reducing an explanation's description length provides a quantifiable objective to improve its quality. We show how structural measures of \textit{faithfulness} and \textit{complexity} can serve as tractable surrogates for minimising compositional description length. 
Finally, we demonstrate that \textit{compressive refinement} generalises current mechanistic approaches to neural network decomposition and supply a parsimony criterion under which minimising representation complexity is guaranteed to produce better compositional explanations.

\paragraph{Future work} We discuss three directions to translate these theoretical foundations into practical tools for interpretation.
\begin{itemize}
\item We focus on decomposition; the description and validation stages of the reverse-engineering workflow \cite{sharkeyOpenProblemsMechanistic2025} require separate treatment, and integrating compressive refinement with causal inference \cite{joshiCausalityKeyInterpretability2026} is a natural next step.
\item Automated compressive refinement requires faithfulness and complexity to be efficiently measurable and optimisable. Tensor networks could meet these requirements, and early results suggest they can scale these objectives tractably \cite{doomsCompositionalityUnlocksDeep2024, hamrerasTensorizationPowerfulUnderexplored2025}.
\item While allowing subjective priors, standard complexity assumes unbounded computational capacity, poorly modelling humans. Epiplexity \cite{finziEntropyEpiplexityRethinking2026} proposes a bounded-observer alternative to capture extractable structure better.
\end{itemize}

\section*{Acknowledgements}

This research received funding from the Research Foundation Flanders (FWO) grant 11A8R26N (WG) and the Flanders AI Research Program (TD).


\section*{Impact Statement}

This paper presents work whose goal is to advance the field of Machine
Learning. There are many potential societal consequences of our work, none of
which we feel must be specifically highlighted here.


\bibliography{paper}
\bibliographystyle{icml2026}

\newpage
\appendix
\onecolumn
\setlist[itemize]{}

\section{String diagrams}
\label{app:string_diagram}
\textit{String diagrams} \cite{coeckePicturingQuantumProcesses2018} are a category-theoretic tool for visually describing compositional structures and formally reasoning about them. They place the composition of processes, representing the flow of information, at the centre of analysis. By linking topology to algebra, intuitive visual manipulations directly correspond to formal algebraic proofs of behavioural equivalence, with the mathematical details hidden from view. Diagrams can be manipulated via equational rules,
summarised by the principle: ``Only connectivity matters'' \cite{coeckePicturingQuantumProcesses2018}.
\begin{definition}[String Diagram]
    A \textit{string diagram} $\syn D$ is a graphical representation of a composite process (formally a morphism $\syn D: \syn I \to \syn O$ in a monoidal category $\syn S$) characterised by:
    \begin{itemize}
        \item \textit{Wires} representing types (objects $\syn A, \syn B, \syn C, \ldots \in \text{Obj}(\syn{S})$);
        \item \textit{Boxes} representing processes between types (morphisms $\syn M: \syn A \to \syn B \in \text{Hom}_{\syn{S}}(\syn A,\syn B)$).
    \end{itemize}
    Diagrams are composed via:
    \begin{itemize}
        \item \textit{Sequential composition:} Connecting output wires of one box to matching input wires of another vertically (composition of morphisms $\syn M: \syn A \to \syn B$ and $\syn N: \syn B \to \syn C$ gives $\syn N \circ \syn M: \syn A \to \syn C$);
        \item \textit{Parallel composition:} Placing boxes side-by-side horizontally (monoidal product of morphisms $\syn M: \syn A \to \syn C$ and $\syn N: \syn B \to \syn D$ gives $\syn M \otimes \syn N: \syn A \otimes \syn B \to \syn C \otimes \syn D$)
    \end{itemize}
\end{definition}
String diagrams further allow defining \textit{states} as boxes with no inputs, \textit{effects} as boxes with no outputs, and \textit{scalars} as boxes with neither inputs nor outputs. \cref{fig:diagram} gives examples of string diagrams; comprehensive introductions are available elsewhere \cite{coeckePicturingQuantumProcesses2018,shieblerCategoryTheoryMachine2021,tullCompositionalInterpretabilityXAI2024}. 

\section{Proof of the description length bound}
\label{app:proof}
\begin{proof}{Proof of \cref{prop:description_length_bound}}
By \cref{def:compositional_interpretation}
$$
L(\expl{I_S}(\syn D)) = L(\expl{I_C}(\sem{[[\,D\,]]}))
$$
The description length of a mapping's output is bounded by the sum of the description length of the mapping and its input (information non-increase). Therefore
$$
 L(\expl{I_C}(\sem{[[\,D\,]]})) \le L(\sem{[[\,D\,]]}) + L(\expl{I_C}\rvert_{\sem{{C}_{D}}})
$$
Because, by \cref{def:compositional_model}, $\sem{[[\,\cdot\,]}]$ is a structure-preserving (monoidal) functor, the description length of the semantic object $\sem{[[\,D\,]]}$ is bounded by the sum of the description lengths of its constituent parts and their arrangement (information subadditivity)
$$
L(\sem{[[\,D\,]]}) \le  L(\syn D) + \sum_{\syn f \in \text{mor}(\syn{S_D})} L(\sem{[[\,f\,]]})
$$
Combining these two inequalities yields
$$
L(\expl{I_S}(\syn D)) \le L(\syn D) + \sum_{\syn f \in \text{mor}(\syn{S_D})} L(\sem{[[\,f\,]]}) + L(\expl{I_C}\rvert_{\sem{C_{D}}}) = L^{\sem{\text{rep}}}(M) + L^{\expl{\text{int}}}(M, \expl{I_C})$$
The right-hand side matches the definition of $L(M, \expl{I_S})$ in \cref{def:compositional_description_length}, proving that $L(\expl{I_S}(\syn D)) \le L(M, \expl{I_S})$.
\end{proof}

\section{Comparison of interpretability methods} \label{app:interpretability_comparison}

Mechanistic interpretability methods trade off the scope of refinement against traceability to the original model: a broader refinement space can expose more natural atoms of computation, but at the cost of losing that trace. \cref{tab:refinement_levels} lists common methods along this spectrum, with a representative example for each category.

\begin{table*}[ht]
\centering
\small
\renewcommand{\arraystretch}{1.3}
\begin{tabularx}{\textwidth}{@{}>{\raggedright\arraybackslash}p{0.20\textwidth} >{\raggedright\arraybackslash}p{0.28\textwidth} X@{}}
\toprule
\textbf{Level} & \textbf{Constraint on $\syn R$} & \textbf{Limitation on $\expl{I_S}$} \\
\midrule

\textbf{L0.\ Post-hoc} \newline
saliency maps \newline
\citep{simonyan2014deepinsideconvolutionalnetworks} &
$\syn R$ is not constructed. &
No syntax-based interpretation $\expl{I_S}$ is created; two models
that implement the same function by different mechanisms receive the
same explanation.
\\

\textbf{L1.\ Architectural} \newline
attention heads \newline
\citep{olsson2022incontextlearninginductionheads} &
$\syn R = \syn{\text{id}_S}$. &
The atoms of analysis are fixed by the architecture. Any functional
unit that crosses architectural boundaries, or lives inside one,
cannot be captured.
\\

\textbf{L2.\ Piecewise} \newline
transcoders \newline
\citep{dunefskyTranscodersFindInterpretable2024} &
$d(\sem{[[\,f\,]]}, \sem{[[\,R(f)\,]]'}) \le \epsilon$ for each
component $\syn f$ of $\syn D$. &
Each component is refined in isolation. Regularities that span or
combine components (cross-layer superposition) are structurally
excluded --- refining them would break the per-component match.
\\

\textbf{L3.\ Global} \newline
attribution graphs \newline
\citep{lindsey2024sparse-crosscoders} &
$d(\sem{[[\,D\,]]}, \sem{[[\,R(D)\,]]'}) \le \epsilon$ only. &
Individual components of $\syn S$ need not correspond to anything in
$\syn{S'}$. Localising a claim to a specific component requires an
additional invariance in $\sem C$ (e.g.\ linearity of the residual
stream).
\\

\textbf{L4.\ Intrinsic} \newline
weight-sparse transformers \newline
\citep{gaoWeightsparseTransformersHave2025} &
$\syn R$ is not constrained. &
$\syn R$ is trivial as an interpretation of a pretrained source model; the framework analyses $M'$ directly rather than explaining a given $M$.
\\ \bottomrule
\end{tabularx}

\caption{Interpretability methods as subclasses of refinement.
Reading down column~2, the constraint on $\syn R$ weakens from
per-component match, to whole-diagram match, to trivial match
against a collapsed source; reading down column~3, what each method
cannot see grows accordingly. L4 fits only as a limiting case: the
framework analyses $M'$ directly, not as an interpretation of any
pretrained model.}
\label{tab:refinement_levels}
\end{table*}

\section{Worked example of representation complexity}
\label{app:worked_example}

To illustrate the representation complexity $L^{\sem{\text{rep}}}(M)$, we compute it for the linear animal-and-colour classifier of \autoref{fig:diagram}, encoding each parameter with $8$ bits and each box or wire with $16$ bits. The encoding is naive and not comonotonic (\cref{the:comonotonicity}), and the specific constants matter less than the direction of the inequality. The model maps $64$ input features to logits over $4$ classes (two animals, two colours), split across three wires of $8$, $24$, and $32$ features.

\paragraph{Naive decomposition} A naive decomposition treats the classifier as a single dense map $\sem{[[\,W\,]]} \in \mathbb{R}^{4 \times 64}$: one box with an input and an output wire ($3$ elements).
$$
L^{\sem{\text{rep}}} = \underbrace{3 \cdot 16}_{L(\syn D)} + \underbrace{4 \cdot 64 \cdot 8}_{\sum_{\syn f} L(\sem{[[\,f\,]]})} = 48 + 2048 = 2096 \text{ bits}
$$

\paragraph{Entangled decomposition $\syn S$} The original syntax of \autoref{fig:diagram} routes the first two input wires through a bottleneck $\sem{[[f]]} \in \mathbb{R}^{4 \times (8 + 24)}$ that feeds both an animal head $\sem{[[g]]} \in \mathbb{R}^{2 \times 4}$ and a colour head $\sem{[[h]]} \in \mathbb{R}^{2 \times (4 + 32)}$ ($10$ elements). Both heads read $\sem{[[f]]}$, so they are entangled.
$$
L^{\sem{\text{rep}}}(M) = \underbrace{10 \cdot 16}_{L(\syn D)} + \underbrace{(4 \cdot (8 + 24) + 2 \cdot 4 + 2 \cdot (4 + 32)) \cdot 8}_{\sum_{\syn f} L(\sem{[[\,f\,]]})} = 160 + 1664 = 1824 \text{ bits}
$$

\paragraph{Refined decomposition $\syn{S'}$} The compressive refinement $\syn R$ replaces $\sem{[[h]]}$ by $\sem{[[h']]} \in \mathbb{R}^{2 \times 32}$, reading only its own input wire, while $\sem{[[f]]}$ and $\sem{[[g]]}$ are unchanged ($9$ elements). This severs the wire carrying the hidden representation to the colour head, disentangling the two paths.
$$
L^{\sem{\text{rep}}}(M') = \underbrace{9 \cdot 16}_{L(\syn{D'})} + \underbrace{(4 \cdot (8 + 24) + 2 \cdot 4 + 2 \cdot 32) \cdot 8}_{\sum_{\syn f} L(\sem{[[\,f\,]]'})} = 144 + 1600 = 1744 \text{ bits}
$$

Representation complexity decreases at each step, $2096 > 1824 > 1744$, so the refinement is compressive. It also disentangles the colour head from the animal pathway, which does not raise interpretation complexity, so by the parsimony criterion (\cref{the:parsimony_criterion}) the refined model gives a more concise compositional explanation.

\section{Notation}
\label{app:notation}

The paper uses colour to distinguish three roles: $\syn{\text{syntax}}$, $\sem{\text{semantics}}$, and $\expl{\text{explanations}}$. A symbol's colour indicates the specific category an object or morphism inhabits, or the target category of a functor. Uncoloured symbols fall outside these categories.

\setlist[itemize]{}

\paragraph{Compositional models}
\begin{itemize}
    \item $M = (\syn D, \syn S, \sem C, \sem{[[\,\cdot\,]]})$ --- a compositional model (\cref{def:compositional_model}).
    \item $\syn S$ --- syntactic monoidal category; $\syn{S_D} = (\text{ob}(\syn{S_D}), \text{mor}(\syn{S_D}))$ its signature (objects and generators appearing in $\syn D$).
    \item $\sem C$ --- semantic monoidal category; $\sem{C_D}$ the image of $\syn{S_D}$ under the representation.
    \item $\syn D: \syn I \to \syn O$ --- string diagram over $\syn S$; $\syn f \in \text{mor}(\syn{S_D})$ a generator of $\syn D$.
    \item $\sem{[[\,\cdot\,]]}: \syn S \to \sem C$ --- monoidal representation functor; $\sem{[[\,f\,]]}$ the semantic instantiation of the generator $\syn f$; $\sem{[[\,D\,]]}$ the semantics of the full diagram.
    \item $M' = (\syn{D'}, \syn{S'}, \sem C, \sem{[[\,\cdot\,]]'})$ --- a refined compositional model sharing semantics $\sem C$ with $M$ (\cref{def:syntactic_refinement}).
\end{itemize}

\paragraph{Interpretations}
\begin{itemize}
    \item $\expl H$ --- signature of human-interpretable terms.
    \item $\expl{I_S}: \syn S \to \expl H$ --- compositional interpretation (syntactic grounding).
    \item $\expl{I_C}: \sem C \to \expl H$ --- post-hoc interpretation (semantic grounding); $\expl{I_C}\rvert_{\sem{C_D}}$ its restriction to the semantic components appearing in $\syn D$.
    \item $\sem{I^*_C}: \expl H \to \sem C$ --- reconstruction map witnessing compositional faithfulness (\cref{def:compositional_faithfulness}).
    \item Commutativity $\expl{I_S} = \expl{I_C} \circ \sem{[[\,\cdot\,]]}$ links the two groundings.
\end{itemize}

\paragraph{Refinements}
\begin{itemize}
    \item $\syn R: \syn S \to \syn{S'}$ --- syntactic refinement; a functor with $\syn{D'} = \syn R(\syn D)$ and $\sem{[[\,\cdot\,]]} = \sem{[[\,\cdot\,]]'} \circ \syn R$.
    \item $\syn R = \syn{\text{id}_S}$ --- the identity refinement (L1 in \cref{tab:refinement_levels}).
    \item A refinement is \textit{compressive} when $L^{\sem{\text{rep}}}(M') \le L^{\sem{\text{rep}}}(M)$.
\end{itemize}

\paragraph{Description lengths}
\begin{itemize}
    \item $L(\,\cdot\,)$ --- description length (in bits).
    \item $L^{\sem{\text{rep}}}(M) = L(\syn D) + \sum_{\syn f \in \text{mor}(\syn{S_D})} L(\sem{[[\,f\,]]})$ --- representation complexity; upper-bounds $L(\sem{[[\,D\,]]})$ (\cref{app:proof}).
    \item $L^{\expl{\text{int}}}(M, \expl{I_C}) = L(\expl{I_C}\rvert_{\sem{C_D}})$ --- interpretation complexity.
    \item $L(M, \expl{I_S}) = L^{\sem{\text{rep}}}(M) + L^{\expl{\text{int}}}(M, \expl{I_C})$ --- compositional description length; upper-bounds $L(\expl{I_S}(\syn D))$ (\cref{prop:description_length_bound}).
\end{itemize}

\paragraph{Faithfulness}
\begin{itemize}
    \item $d$ --- semantic distortion metric on parallel morphisms in $\sem C$ (or a generalisation thereof).
    \item $\epsilon \in \mathbb{R}$ --- tolerance for relaxed equalities.
    \item $d(\sem{[[\,D\,]]}, \sem{[[\,R(D)\,]]'}) \le \epsilon$ --- behavioural faithfulness of a refinement $\syn R$.
    \item $d(\sem{[[\,f\,]]}, \sem{I^*_C}(\expl{I_S}(\syn f))) \le \epsilon$ for every $\syn f \in \text{mor}(\syn S)$ --- compositional faithfulness of an explanation (\cref{def:compositional_faithfulness}).
\end{itemize}


\end{document}